\documentclass[10pt]{article}
\usepackage[margin=0.88in]{geometry}
\usepackage[T1]{fontenc}
\usepackage[utf8]{inputenc}
\usepackage{lmodern,microtype}
\usepackage{amsmath,amssymb,amsthm}
\usepackage{graphicx,booktabs,tabularx,array,xcolor,float}
\usepackage[font=small,labelfont=bf]{caption}
\usepackage[numbers,sort&compress]{natbib}
\usepackage[colorlinks=true,allcolors=blue!45!black]{hyperref}
\usepackage{placeins}
\newtheorem{proposition}{Proposition}

\newcommand{\method}{Geo-vision}
\newcommand{\R}{\mathbb{R}}
\newcommand{\pos}[1]{\left[#1\right]_{+}}
\newcommand{\readas}[1]{\par\noindent\textit{Read as:} #1\par}
\newcolumntype{Y}{>{\raggedright\arraybackslash}X}
\newcommand{\GeoAP}{0.7577}
\newcommand{\GeoAUC}{0.9024}
\newcommand{\YoloAP}{0.7458}
\newcommand{\AppearanceGain}{0.0132}
\newcommand{\TopoGain}{0.0049}
\newcommand{\YoloGain}{0.0119}
\newcommand{\TopoCI}{[+0.0018, +0.0081]}
\newcommand{\TopoAUCGain}{0.0003}
\newcommand{\LowBudgetGain}{0.0448}
\newcommand{\TrainSeventyFive}{33,179}
\newcommand{\TrainFull}{44,242}
\newcommand{\APSeventyFive}{0.7496}
\newcommand{\LearningSeventyFiveCI}{[-0.0031, +0.0109]}
\newcommand{\GeoLatency}{26.98}

\newcommand{\YoloLatency}{11.20}

\newcommand{\GeoMemory}{185}
\newcommand{\YoloMemory}{437}

\newcommand{\FOnePrecision}{0.6581}
\newcommand{\FOneSensitivity}{0.7003}
\newcommand{\FOneSpecificity}{0.9078}
\newcommand{\FOneScore}{0.6786}
\newcommand{\HighPrecision}{0.9662}
\newcommand{\HighPrecisionRecall}{0.1356}
\newcommand{\SmallTP}{6}
\newcommand{\SmallAP}{0.2119}
\newcommand{\LargeAP}{0.7659}
\newcommand{\PartLowAP}{0.0374}
\newcommand{\PartHighAP}{0.9426}
\newcommand{\PartLowName}{roof}
\newcommand{\PartHighName}{front bumper}
\hypersetup{pdftitle={Geo-vision: Native-Resolution Geometry and Sampled Topology for Vehicle-Part Damage Classification},pdfauthor={Usef Faghihi, Amir Saki, Serge Jonathan Kogbeti-Wongo}}
\title{\method: Native-Resolution Geometry and Sampled Topology for Vehicle-Part Damage Classification}
\author{Usef Faghihi \quad Amir Saki \quad Serge Jonathan Kogbeti-Wongo\\
\small Universit\'e du Qu\'ebec \`a Trois-Rivi\`eres, Canada\\
\scriptsize\texttt{usef.faghihi@uqtr.ca} \quad\texttt{amir.saki@uqtr.ca}\\
\scriptsize\texttt{Jonathan.Serge.Kogbeti.Wongo@uqtr.ca}}
\date{}
\begin{document}
\maketitle
\begin{abstract}
Geo-vision classifies vehicle-part damage using a fixed representation combining appearance, multiscale geometry, orientation, and sampled digital topology. Connected components, holes, Euler characteristics, and spatial arrangement complement amplitude-sensitive pooling and a supervised tree classifier. We establish amplitude preservation, axial sign invariance, and conditional operator stability. Retrospective checkpoint evaluation on 55,303 crops from 4,916 source groups yields average precision 0.7577, exceeding 0.7458 for an ImageNet-pretrained frozen YOLO11n comparator on identical crops. The complete added structural block, comprising topology, occupancy, and component location, improves average precision by 0.00493 beyond appearance and geometry. Geo-vision uses no external encoder pretraining and has higher AP point estimates at every matched label budget. With approximately 25\% fewer task labels, its AP point estimate exceeds full-data YOLO, although the adjusted interval includes zero. Complete serial CPU inference requires 26.98 ms per crop versus YOLO's 11.20 ms, with peak memory 185 versus 437 MiB. Independent external validation remains necessary.
\end{abstract}
\noindent\textbf{Keywords:} Geo-vision; vehicle damage; multiscale geometry; digital topology; Betti numbers; sample efficiency.

\section{Introduction}
An image of a damaged vehicle part may contain a narrow scratch, a fragmented boundary, a dent-induced intensity ridge, or an irregular network of strong responses. Similar average intensity or edge occupancy can accompany different structural arrangements. A representation that averages away these distinctions can lose information relevant to classification. Conversely, connectivity alone is insufficient: a reflection or manufactured panel boundary may have the same topological summary as a damage pattern. \method\ therefore combines photometric, geometric, and topological observations of the same crop.

The task considered here is binary classification of an already localized vehicle part. A colour crop is mapped to a continuous damage score. Localization, damage segmentation, severity, and repair cost are separate tasks. CarDD, for example, provides annotations for damage detection and segmentation \citep{wang2023cardd}; its detection metrics cannot be identified with the crop-level AP used in this study. The present private dataset supplies crop labels and recorded source groups, enabling grouped comparison of image representations while holding the localization task fixed.

The defining idea of \method\ is an explicit structural image representation. Derivatives are observed at several scales before the native branch is enlarged for spatial pooling. Response magnitude is retained alongside axial orientation and cross-scale agreement. Superlevel masks then express how coherent responses connect and enclose holes. A fixed collection of local and global summaries becomes the input to a learned tree classifier. Thus the representation has no externally learned image encoder, while the readout is supervised and depends on task labels.

The code identifies appearance, native magnitudes, orientation, and topology as B, N, O, and T; their concatenation is internally named BNOT. These identifiers describe \method's components and ablations. These coordinate names identify one Geo-vision representation.

We contribute a code-consistent account of \method's structural observations and complementary adjacency conventions; proofs of operator properties with explicit stability conditions; and full-cohort checkpoint reproduction with paired uncertainty, label-budget comparisons, and repeated CPU timing.

\section{Relationship to geometric and topological vision}
Histograms of oriented gradients retain directional organization \citep{dalal2005}; local binary patterns encode texture \citep{ojala2002}; structure tensors estimate an unoriented local axis \citep{bigun1991}; and multiscale Hessians describe ridge-like curvature \citep{frangi1998}. \method\ combines these established operators with native-amplitude observations and sampled topology.

Digital topology describes foreground components and background holes under compatible adjacency conventions \citep{rosenfeld1979}. These quantities complement occupancy: equal foreground area does not determine connectedness or enclosed loops. Persistent homology additionally tracks homology classes through a filtration, and persistence-diagram stability can hold under appropriate assumptions \citep{cohensteiner2007}. The released \method\ code instead samples Betti/Euler measurements at finitely many response thresholds. It computes no persistence diagram, no birth--death pairing, and no three-dimensional homology. The term \emph{cross-scale survival} below refers to coherent response strength across image scales, not to a persistence lifetime.

The readout uses LightGBM gradient-boosted trees \citep{ke2017}. The comparator uses a frozen ImageNet-pretrained YOLO11n classification encoder \citep{ultralytics2024}. Different preprocessing and readout families make this a comparison of complete crop classifiers, not an isolated encoder comparison or an evaluation of a fine-tuned detector.

\section{The Geo-vision representation}
\subsection{Input, preprocessing, and appearance}
Let $I\in\{0,\ldots,255\}^{H\times W\times3}$ be a nonempty colour crop. The implementation accepts a uint8 BGR array whose aspect-preserving texture resize retains at least five pixels on each axis; narrower crops are rejected explicitly. A fixed preprocessing rule proposes narrow orange strokes using colour, thickness, elongation, and colour-uniformity criteria; selected pixels are replaced by averages of nearby unselected pixels. Selection uses no outcomes. Genuine image content satisfying the same rule can also be changed, so marker cleaning is part of the measured pipeline rather than a guaranteed removal of annotation artefacts.

The appearance block $B\in\R^{2666}$ combines a 1,536-coordinate spatial representation with 1,130 texture/colour coordinates. Spatial measurements include local gradient statistics, orientation histograms, absolute and relative edge occupancy, component summaries, and elementary topology. Texture/colour measurements include local binary patterns, intensity and locally standardized histograms, Lab layout, and a short-wave Gabor bank. Resizing preserves aspect ratio; the spatial branch has an adaptive long side between 160 and 256 pixels, and the texture branch between 144 and 224. Gabor responses use a bounded 96-pixel-long copy. These resizing rules are distinct from the native derivative branch.

For that branch, the cleaned crop is downsampled only when its longer side exceeds 256 pixels. It is never enlarged before differentiation. Its normalized grayscale observation and median observation are
\begin{equation}\label{eq:gray}
u=(0.299I_R+0.587I_G+0.114I_B)/255,
\qquad \widetilde u=\operatorname{median}_{3\times3}(u).
\end{equation}
\readas{observe the same crop in raw and median-filtered grayscale. Their agreement and disagreement describe different structures; neither observation is assumed to isolate physical damage.}

\begin{table}[tb]\centering\small
\caption{The fixed coordinate schema of \method. The final vector is float32 and rounded to six decimal places.}\label{tab:schema}
\begin{tabularx}{\linewidth}{l r Y}\toprule
Observation block & Coordinates & What is represented\\\midrule
Appearance, $B$ & 2,666 & Spatial gradients, texture, colour, occupancy, and elementary topology\\
Native magnitudes, $N$ & 992 & Absolute strengths and raw/median multiscale summaries\\
Orientation/ridges, $O$ & 480 & Axial direction, scale agreement, and signed curvature\\
Multiscale topology, $T$ & 776 & Occupancy, tile Euler values, component locations, and global Betti counts\\\midrule
\method, $\Phi(I)$ & 4,914 & Concatenation supplied to a supervised tree readout\\\bottomrule
\end{tabularx}
\end{table}

\subsection{Native multiscale geometry}
For $\sigma\in\{0.7,1.4,2.8\}$ pixels, let $G_\sigma$ denote Gaussian smoothing. Scale-normalized derivatives, gradient magnitudes, and structure tensors are
\begin{equation}\label{eq:derivative}
u_\sigma=G_\sigma*u,\quad g_\sigma=\sigma\nabla u_\sigma,
\quad m_\sigma=\|g_\sigma\|_2,
\quad J_\sigma=G_{1.5\sigma}*(g_\sigma g_\sigma^\top).
\end{equation}
\readas{measure how strongly intensity changes at each scale, then average its local directional organization. The factor $\sigma$ expresses strength in scale-normalized units.}
The code uses $3\times3$ Sobel first derivatives scaled by $1/8$ and reflected image boundaries. Writing the tensor entries as $J_{11},J_{12},J_{22}$, define
\begin{equation}\label{eq:axial}
d_\sigma=\sqrt{(J_{11}-J_{22})^2+4J_{12}^2},\quad
s_\sigma=\sqrt{d_\sigma},\quad
z_\sigma=\frac{(J_{11}-J_{22},2J_{12})}{d_\sigma}.
\end{equation}
\readas{retain an anisotropic response strength and a doubled-angle orientation. The doubled angle identifies an axis even when its direction is reversed.}
The implementation sets $z_\sigma=0$ when $d_\sigma\leq10^{-12}$. Otherwise it is a unit vector $(\cos2\theta,\sin2\theta)$. For adjacent scales, the coherent response is
\begin{equation}\label{eq:coherence}
c_k=\pos{z_{\sigma_k}^{\top}z_{\sigma_{k+1}}}
       \min(s_{\sigma_k},s_{\sigma_{k+1}}),\qquad k\in\{1,2\}.
\end{equation}
\readas{a response survives only as strongly as the weaker adjacent-scale observation, discounted when their axes disagree. It is a local strength field, not a causal effect or a persistence diagram.}

For a nonnegative field $f$ and normalized nonnegative cell weights $w_i$, define
\begin{equation}\label{eq:pool}
M(f)=\sum_iw_if_i,\qquad
P_4(f)=\left(\sum_iw_if_i^4\right)^{1/4},\qquad\sum_iw_i=1.
\end{equation}
\readas{the mean records distributed response strength; the fourth-order mean emphasizes stronger responses while preserving their amplitude.}
Area pooling supplies the weights. A field smaller than the requested grid is bilinearly enlarged first; its fourth power is computed \emph{after} this interpolation. No per-cell normalization removes amplitude in the native magnitude/orientation branch.

The six raw/median gradient-magnitude fields are pooled on $4\times4$ grids. Five further fields use both $4\times4$ and $8\times8$ grids: minimum magnitude over the three raw scales; the analogous median minimum; their pointwise minimum; their positive raw-minus-median difference; and $|u-\widetilde u|$. Thus $N$ has $6(2\cdot16)+5[2(16+64)]=992$ coordinates.

Eight axial response fields use $\theta_j=j\pi/8$ and
\begin{equation}\label{eq:bins}
a_j=\max_{k=1,2}\left\{c_k\pos{z_{\sigma_k}^{\top}
            (\cos2\theta_j,\sin2\theta_j)}\right\},\quad j=0,\ldots,7.
\end{equation}
\readas{assign coherent strength to fixed image-axis bins, retaining magnitude and spatial position rather than only a normalized orientation histogram.}
Raw, median, and pointwise raw/median coherent fields also produce three scale-disagreement summaries, $D(c)=\max(c_1,c_2)-(c_1+c_2)/2$, and two raw/median intersection fields.

Signed ridge fields use the scale-normalized Hessian $H_\sigma=\sigma^2\nabla^2u_\sigma$, with eigenvalues $\lambda_+\geq\lambda_-$:
\begin{equation}\label{eq:ridge}
r_\sigma^+=\pos{\lambda_+-|\lambda_-|},\qquad
r_\sigma^-=\pos{-\lambda_--|\lambda_+|}.
\end{equation}
\readas{keep complementary signs of dominant local curvature, so that bright and dark ridge-like structures can be observed separately.}
Second derivatives use Sobel scaling $1/4$. Hessian doubled-angle agreement weights the weaker adjacent-scale ridge; the minimum of the two adjacent-pair responses is retained per polarity. All 15 orientation/ridge fields use mean and fourth-order pooling on $4\times4$ grids, yielding $O\in\R^{480}$.

\subsection{Sampled digital topology}
For each raw coherent field and each fixed level $\tau\in\{0.01,0.03\}$, form the superlevel mask
\begin{equation}\label{eq:masks}
A_{k,\tau}(x)=\mathbf{1}\{c_k(x)\geq\tau\}.
\end{equation}
\readas{a pixel belongs to the structural foreground when its coherent strength reaches the chosen level. The two levels produce nested masks for each field.}
Foreground uses 8-connectivity; its complementary background uses 4-connectivity. Define $\beta_0$ as the number of foreground components and $\beta_1$ as the number of background components not touching the image border. With these conventions,
\begin{equation}\label{eq:betti}
\chi(A)=\beta_0(A)-\beta_1(A).
\end{equation}
\readas{the Euler characteristic balances fragmentation against enclosed holes. A filled component has $(\beta_0,\beta_1)=(1,0)$; a ring has $(1,1)$.}
Complementary adjacency matters for diagonal contacts: choosing the same connectivity indiscriminately for foreground and background can make the intended interpretation inconsistent.

Each of the four masks contributes 64 occupancy values, 64 independent tile Euler characteristics, 64 log-transformed component-centroid counts, and the global values $\log(1+\beta_0)$ and $\log(1+\beta_1)$. Components are identified on the full mask before their centroids are assigned to $8\times8$ cells. Tile Euler values are measured with each tile's own exterior background. Their sum is generally \emph{not} the full-image Euler characteristic, because tile boundaries cut structures. The topology block therefore has $4(3\cdot64+2)=776$ coordinates.

This design retains both topology and position. It distinguishes a connected response from fragmented responses with the same total area, and a filled structure from one enclosing a hole. It does not uniquely recover shape: many different images share the same Betti numbers, and even the complete finite vector is not an injective representation of all images. Across thresholds, counts are sampled without tracking the identities of individual homology classes. Statements about a formal persistence lifetime would therefore exceed what the code computes.

\subsection{Supervised readout and ablations}
The complete feature map and damage score can be written as
\begin{equation}\label{eq:score}
\Phi(I)=\operatorname{round}_6\bigl(B(I),N(I),O(I),T(I)\bigr),
\qquad p(I)=\operatorname{logit}^{-1}\!\left(\sum_{t=1}^{600}h_t(\Phi(I))\right),
\end{equation}
\readas{concatenate the fixed observations, then add the learned leaf contributions of 600 binary-objective trees and convert their raw score to a probability.}
The trees learn from supplied damage labels; the image operators and their thresholds remain fixed. We compare appearance alone ($B$), appearance plus geometry ($B+N+O$), and complete \method. Since $B$ already contains elementary topology, removing $T$ tests the \emph{additional multiscale structural block}, not the absence of all topology. Moreover, $T$ contains occupancy and component-location measurements alongside Euler/Betti values; its ablation effect cannot be attributed to homology alone. A topology-only performance claim requires a separate Euler/Betti ablation controlling for occupancy and centroid location.

\section{Mathematical properties and their scope}\label{sec:properties}
The following results concern the defined operators in real arithmetic. Floating-point computation and six-decimal quantization are subsequently checked by replay. They do not assert that every physical damage pattern is identifiable from a crop.

\begin{proposition}[Amplitude retention and sparse response]\label{prop:pool}
For nonnegative fields on a fixed grid, $M(af)=aM(f)$ and $P_4(af)=aP_4(f)$ for $a\geq0$. Both operators are 1-Lipschitz in the supremum norm. If $f=A>0$ on cells of total weight $p>0$ and zero elsewhere, then
\begin{equation}\label{eq:sparse}
M(f)=Ap,\qquad P_4(f)=Ap^{1/4},\qquad P_4(f)/M(f)=p^{-3/4}.
\end{equation}
\end{proposition}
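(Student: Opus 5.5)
The plan is to treat the three claims separately, working directly from the definition \eqref{eq:pool} with weights $w_i\ge0$ and $\sum_i w_i=1$ held fixed, since the grid is fixed.

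\emph{Homogeneity.} Linearity of the weighted sum gives $M(af)=\sum_i w_i a f_i=aM(f)$. For $P_4$, take $a^4$ out of the sum, $\sum_i w_i(af_i)^4=a^4\sum_i w_if_i^4$, and then take the fourth root. This uses $a\ge0$, so that $(a^4)^{1/4}=a$, and nonnegativity of $f$, so that $P_4$ is well defined.

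\emph{Lipschitz bounds.} For $M$, compute $|M(f)-M(g)|\le\sum_i w_i|f_i-g_i|\le\|f-g\|_\infty\sum_iw_i=\|f-g\|_\infty$. For $P_4$, observe that it is the $L^4$ norm of $f$ with respect to the probability measure $\mu=\sum_i w_i\delta_i$. The reverse triangle inequality (Minkowski) then gives $|P_4(f)-P_4(g)|\le P_4(|f-g|)$. Since $|f_i-g_i|^4\le\|f-g\|_\infty^4$ for every $i$ and the weights sum to one, $P_4(|f-g|)\le\|f-g\|_\infty$. The main point to state carefully is that Minkowski's inequality applies to weighted $\ell^4$ norms with nonnegative weights. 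Cells with $w_i=0$ are harmless, since they contribute nothing. We also need to allow the reverse triangle inequality with $f,g\ge0$ but $f-g$ of arbitrary sign, which is why we bound via $|f-g|$. Since $\|\cdot\|_\infty$ is taken over grid cells, the bound holds as stated.

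\emph{Sparse case.} Let $S$ be the support, with $\sum_{i\in S}w_i=p$. Direct substitution gives $M(f)=Ap$ and $P_4(f)=(A^4p)^{1/4}=Ap^{1/4}$. Because $A,p>0$, $M(f)\ne0$, so the ratio is $p^{1/4-1}=p^{-3/4}$.

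Finally, note the scope of these results. The field is the one after any bilinear enlargement, since both operators act on the values on the pooling grid. The results are exact in real arithmetic, as the section preamble specifies. No step is a real obstacle; the only nontrivial ingredient is Minkowski's inequality for the $P_4$ Lipschitz bound.
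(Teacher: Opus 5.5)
Your proposal is correct and follows the paper's proof essentially step for step. You factor out $a$ for homogeneity, bound $M$ by the weighted triangle inequality, and bound $P_4$ via the reverse triangle inequality for the weighted fourth norm followed by $P_4(|f-g|)\le\|f-g\|_\infty$; you then obtain the sparse-case identities by direct substitution.
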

\readas{a spatially sparse strong response contributes more to the fourth-order mean than to the ordinary mean. This is an amplitude-sensitive pooling property, not a proof of population rarity detection.}
\begin{proof}
Homogeneity follows by factoring $a$ out of Eq.~\eqref{eq:pool}. For the mean, $|M(f)-M(g)|\leq\sum_iw_i|f_i-g_i|\leq\|f-g\|_\infty$. For the fourth-order mean, the reverse triangle inequality for the weighted fourth norm gives $|P_4(f)-P_4(g)|\leq P_4(|f-g|)\leq\|f-g\|_\infty$. Substitution of the two-level field gives Eq.~\eqref{eq:sparse}.
\end{proof}
For one active element among 64 equally weighted elements, the ratio is $64^{3/4}\approx22.63$. Both statistics still decrease as support becomes smaller, and neither creates information absent from the image. Unlike normalizing a nonzero vector by its own norm, the pooling preserves positive rescaling. This sensitivity can retain a weak-versus-strong distinction, but it can also respond to illumination and contrast changes. The code implements no density estimator, escort distribution, or causal intervention operator.

\begin{proposition}[Axial sign invariance and ideal contrast scaling]\label{prop:axial}
The tensor and doubled-angle orientation are unchanged by $g_\sigma\mapsto-g_\sigma$. If a fixed grayscale observation becomes $u'=au+b$ with $a>0$, linear derivative/smoothing operators give $g_\sigma'=ag_\sigma$, $J_\sigma'=a^2J_\sigma$, and $s_\sigma'=as_\sigma$. Wherever both observations satisfy the orientation-validity condition, $z_\sigma'=z_\sigma$ and $c_k'=ac_k$.
\end{proposition}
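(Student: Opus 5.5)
The proof is a direct computation from the definitions in Eqs.~\eqref{eq:derivative}--\eqref{eq:coherence}, in three steps.

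\emph{Sign invariance.} The tensor is quadratic in the gradient: $(-g_\sigma)(-g_\sigma)^\top=g_\sigma g_\sigma^\top$. Convolving with $G_{1.5\sigma}$ preserves this equality, so $J_\sigma$ is unchanged. The entries $J_{11},J_{12},J_{22}$ are therefore unchanged, and so are $d_\sigma$, $s_\sigma$, and $z_\sigma$, including the convention $z_\sigma=0$ when $d_\sigma\le 10^{-12}$. This part needs no validity hypothesis.

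\emph{Affine contrast change.} Take $u'=au+b$ with $a>0$.
\begin{itemize}
\item Smoothing: because the Gaussian kernel (and its discrete truncation) is normalized, $G_\sigma*u'=aG_\sigma*u+b$.
\item Gradient: the Sobel stencil, scaled by $1/8$, has zero total weight and so annihilates constants, including under reflected boundaries. Hence $g_\sigma'=\sigma\nabla(au_\sigma+b)=ag_\sigma$.
\item Tensor: the outer product gives $g_\sigma'g_\sigma'^\top=a^2g_\sigma g_\sigma^\top$, and linearity of the outer smoothing gives $J_\sigma'=a^2J_\sigma$.
\item Anisotropy: $d_\sigma'=\sqrt{a^4[(J_{11}-J_{22})^2+4J_{12}^2]}=a^2d_\sigma$, since $a^2\ge0$. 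Then $s_\sigma'=\sqrt{a^2d_\sigma}=as_\sigma$ because $a>0$.
\end{itemize}

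\emph{Orientation and coherence.} At a point where both $d_\sigma>10^{-12}$ and $d_\sigma'>10^{-12}$, we get $z_\sigma'=a^2(J_{11}-J_{22},2J_{12})/(a^2d_\sigma)=z_\sigma$. The validity hypothesis is needed here: the fixed threshold $10^{-12}$ is not scale invariant. For $a<1$ a valid point can fall below the threshold and be set to zero, so the claim is restricted to points where both observations are valid. Then $c_k'=\pos{z_{\sigma_k}^\top z_{\sigma_{k+1}}}\min(as_{\sigma_k},as_{\sigma_{k+1}})$, and since $\min$ is positively homogeneous this equals $a c_k$. This requires validity at both adjacent scales $\sigma_k$ and $\sigma_{k+1}$, which I would state explicitly as the reading of "wherever both observations satisfy the orientation-validity condition."

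The main obstacle is bookkeeping rather than analysis. One must confirm that the implemented discrete operators are exactly linear and that the gradient annihilates constants under boundary reflection. One must also handle the threshold convention pointwise and note that the proof holds in real arithmetic. The grayscale map \eqref{eq:gray} and uint8 quantization are outside scope, since the statement assumes a fixed real observation $u$.
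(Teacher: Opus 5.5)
Your proposal is correct and follows the same direct computation as the paper's proof. The shared steps are sign invariance of the outer product, annihilation of the constant $b$ by differentiation, $a^2$-scaling of $J_\sigma$ and $d_\sigma$, cancellation of $a^2$ in the quotient defining $z_\sigma$, and homogeneity of the minimum in Eq.~\eqref{eq:coherence}; your additional bookkeeping (validity required at both adjacent scales, the non-scale-invariant $10^{-12}$ floor, and the zero-sum Sobel stencil under reflected boundaries) makes explicit what the paper's proof leaves implicit.
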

\begin{proof}
The outer product $gg^\top$ is unchanged by sign reversal. A constant $b$ is annihilated by differentiation. The entries of $J$ scale by $a^2$, so $d$ scales by $a^2$, its square root by $a$, and the quotient defining $z$ cancels the factor $a^2$. Equation~\eqref{eq:coherence} then preserves orientation agreement and scales the minimum strength by $a$.
\end{proof}
The condition excludes a contrast change that crosses the numerical orientation floor. Quantization, clipping, marker selection, and fixed absolute mask thresholds further prevent an unconditional contrast-invariance claim for the complete pipeline. Axial sign invariance is also not rotation invariance: fixed image-axis bins and spatial cells change under an image rotation. Retaining position and strength is an intentional representational choice.

\begin{proposition}[Conditional topological stability]\label{prop:margin}
For a finite field $f$ and threshold $\tau$, let $\gamma=\min_x|f(x)-\tau|>0$. If $\|f-g\|_\infty<\gamma$, the masks $\mathbf{1}\{f\geq\tau\}$ and $\mathbf{1}\{g\geq\tau\}$ coincide. Hence their occupancy, component locations, Betti counts, and tile Euler values coincide. For several fields and thresholds, take the minimum of their positive margins.
\end{proposition}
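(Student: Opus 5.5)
The plan is to prove the pointwise claim first and obtain everything else from it, since every listed summary is a deterministic function of the binary mask alone. Fix a grid point $x$; by hypothesis $|f(x)-\tau|\geq\gamma$. We split into two cases. If $f(x)\geq\tau$, then in fact $f(x)\geq\tau+\gamma$, so $g(x)\geq f(x)-\|f-g\|_\infty>\tau+\gamma-\gamma=\tau$, and both indicators equal $1$. If $f(x)<\tau$, then $f(x)\leq\tau-\gamma$, so $g(x)\leq f(x)+\|f-g\|_\infty<\tau$, and both indicators equal $0$. The strict inequality $\|f-g\|_\infty<\gamma$ is exactly what rules out $g(x)=\tau$ in the second case, where the non-strict superlevel convention $\geq$ would otherwise flip the pixel. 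Hence the two masks agree at every pixel. Finiteness of the grid guarantees that $\gamma$ is a minimum rather than an infimum, so assuming $\gamma>0$ is meaningful.

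Next we observe that occupancy, the 8-connected foreground components and their centroid cell assignments, the 4-connected bounded background components giving $\beta_0,\beta_1$, and the tile Euler values computed with each tile's own exterior are all functions of the mask, together with fixed data: the grid, the tile partition, and the adjacency conventions. None of them consults $f$ or $g$ again. Equal masks therefore give equal outputs, including $\log(1+\beta_i)$ and the log-transformed centroid counts. When the feature is stated as computed from the mask after any fixed resizing, we need the field itself to be what is thresholded. In Eq.~\eqref{eq:masks} the thresholding is applied to $c_k$ directly, so no interpolation intervenes.

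For several fields $f^{(j)}$ and thresholds $\tau_\ell$, we set $\gamma_*=\min_{j,\ell}\gamma_{j,\ell}$ and apply the single-mask result to each pair, with the corresponding perturbation $\|f^{(j)}-g^{(j)}\|_\infty<\gamma_*\leq\gamma_{j,\ell}$. The only point needing care is bookkeeping: the hypothesis applies to the fields actually thresholded. For the paper's topology block these are the coherent fields $c_k$, not the input image, and the statement gives no transfer from image perturbations to field perturbations. I expect no genuine obstacle. The subtle step is the boundary case $g(x)=\tau$, which the strict inequality handles.
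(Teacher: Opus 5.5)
Your proof is correct and matches the paper's argument: the perturbation is smaller than every pixel's distance to $\tau$, so no pixel changes side, and every listed summary is computed from the binary mask alone. Your explicit case split, including the remark that the strict bound $\|f-g\|_\infty<\gamma$ matters only for pixels below the threshold because of the $\geq$ convention, simply spells out the paper's one-line reasoning in more detail.
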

\begin{proof}
The perturbation is smaller than every pixel's distance to the threshold, so no pixel can cross it. All subsequent mask-dependent calculations receive the same binary input.
\end{proof}
\readas{topology is unchanged when every response stays on its original side of each threshold. The bound is on the response field; an image-level guarantee additionally needs a bound from image perturbations to that field.}
Without a positive margin, arbitrarily small perturbations at a bridge can merge components, and at a contour break can remove a hole. A formal persistence-diagram stability theorem \citep{cohensteiner2007} does not make fixed-threshold Betti counts unconditionally continuous. The released pipeline uses sampled counts, so its guarantee must be stated at that level.

\begin{proposition}[Conditional preservation of a tree score]\label{prop:trees}
Let a fitted tree ensemble receive $v\in\R^D$. For every internal node on the paths traversed by $v$, write its feature index and threshold as $(j,s)$. If another vector $v'$ obeys $|v_j'-v_j|<|v_j-s|$ at every such node, and all these margins are positive, it reaches the same leaves and receives the same ensemble score.
\end{proposition}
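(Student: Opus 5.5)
We argue tree by tree and then sum, so the ensemble statement reduces to a single-tree statement. Fix one fitted tree $h_t$ and let $n_0,n_1,\ldots,n_L$ be the path that $v$ traverses from the root $n_0$ to its leaf $n_L$. Each internal node $n_\ell$ carries a split $(j_\ell,s_\ell)$ and sends a vector left or right according to whether the selected coordinate lies on one side of $s_\ell$. We show by induction on $\ell$ that $v'$ also visits $n_\ell$. Once this holds for $\ell=L$, we have $h_t(v')=h_t(v)$, and summing over the 600 trees in Eq.~\eqref{eq:score} gives equality of the raw score and hence of $p$.

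The inductive step is the same margin argument as in Proposition~\ref{prop:margin}, now applied to a single coordinate. Suppose $v'$ has reached $n_\ell$. The hypothesis gives $|v'_{j_\ell}-v_{j_\ell}|<|v_{j_\ell}-s_\ell|$, with the right-hand side positive. If $v_{j_\ell}>s_\ell$, then
\[
v'_{j_\ell}>v_{j_\ell}-(v_{j_\ell}-s_\ell)=s_\ell .
\]
Symmetrically, if $v_{j_\ell}<s_\ell$, then $v'_{j_\ell}<s_\ell$. So $v'_{j_\ell}$ lies strictly on the same side of $s_\ell$ as $v_{j_\ell}$, and $v'$ takes the same branch to $n_{\ell+1}$.

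The step that needs care is the tie convention. LightGBM splits with the rule $x\le s$ (left) versus $x>s$ (right). The strict positivity of the margin excludes $v_{j_\ell}=s_\ell$, and strict inequality excludes $v'_{j_\ell}=s_\ell$. Therefore both vectors satisfy the same strict inequality, and the result does not depend on which side the implementation assigns to equality.

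Two further points need stating. First, missing-value routing is outside the statement, since $v,v'\in\R^D$ are finite; we would note this restriction explicitly. Second, the hypothesis constrains only nodes on $v$'s own paths. This is sufficient because the induction shows $v'$ never leaves those paths, so no off-path split is ever evaluated.

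As with Proposition~\ref{prop:margin}, a uniform sufficient condition follows. If $\|v'-v\|_\infty$ is below the minimum margin over all traversed nodes, the score is preserved. Linking this to image perturbations would additionally require bounds through $\Phi$, including the rounding step $\operatorname{round}_6$; that is beyond the proposition as stated.
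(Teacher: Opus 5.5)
Your proof is correct and follows the paper's argument: an induction along each traversed root-to-leaf path, where the strict margin keeps $v'_j$ strictly on the same side of every threshold, followed by summing the unchanged leaf contributions and applying the logistic map. Your additional remarks on the tie convention, on off-path splits never being evaluated, and on the uniform $\|v'-v\|_\infty$ sufficient condition are sound elaborations of points the paper leaves implicit.
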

\begin{proof}
At the root, the strict bound preserves the side of the split. Inductively it preserves the side of each subsequent node on the original path. Each leaf contribution is therefore unchanged, as is their sum and its logistic transformation.
\end{proof}
This explains how a scheduling optimization can preserve predictions despite small floating-point feature differences. It is a conditional statement, not a global Lipschitz bound for trees or an adversarial-robustness certificate. The empirical replay checks directly compare the evaluated scores, without assuming that all possible images have positive split margins.

\section{Computation and controlled structural checks}\label{sec:controlled}
With a fixed number of scales, orientations, thresholds, and grids, native filtering, pooling, and connected-component scans have linear cost in the number $P$ of working pixels. Gaussian/Sobel supports are fixed by the declared scales. A planar mask requires components and complementary background scans, rather than construction of high-dimensional simplicial complexes. A readout traversing $K$ trees of maximum depth $L$ costs $O(KL)$; the deployed descriptor dimension is fixed. These observations explain a bounded and auditable computational pathway, but do not by themselves establish a speed ranking against another implementation.

The optimized source batches independently padded tile Euler calculations using integer $2\times2$ stencils. Zero rows and columns separate tiles so the batching preserves each tile's exterior convention. It also caches immutable schemas and omits unused pooling work. Integer Euler counts can consequently be checked exactly. Floating-point reductions are checked separately because different scheduling can move a rounded coordinate at a quantization boundary. The descriptor's physical interpretation and coordinate order remain fixed.

\begin{figure}[tb]\centering
\includegraphics[width=0.86\linewidth]{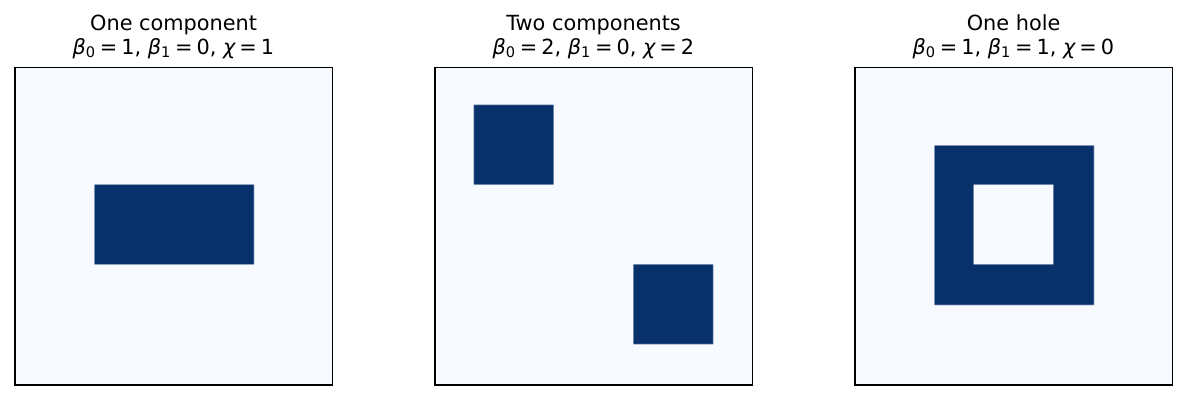}
\caption{Controlled topology checks re-executed with the released operators. The first two masks have equal area (32 pixels) but one versus two foreground components. The ring encloses one hole. This demonstrates information complementary to total occupancy; it does not establish damage-classification accuracy.}\label{fig:topology}
\end{figure}

Every one of the 512 binary $3\times3$ masks is checked against an independent foreground/background component calculation. The test suite additionally covers thin masks, empty tiles, irregular tile boundaries, feature order, small-field interpolation before the fourth power, tied prediction scores, and group-leakage rejection. The evaluated snapshot passed 16 repository tests; the revised source package passes 46, including input, cache, and training-subset regression checks. In a two-level ring field with foreground 0.8, background 0.2, and threshold 0.5, all 1,000 uniform perturbations bounded by 0.299 preserve the mask, as Proposition~\ref{prop:margin} predicts. Conversely, changing a bridge from $0.5-10^{-8}$ to $0.5+10^{-8}$ merges two components. Across 81 positive contrast multipliers, mean and fourth-order strengths change with amplitude while a normalized two-coordinate vector is constant to numerical precision.

These checks test the measurement semantics and the conditions of the propositions. They are not an external visual benchmark. In particular, an equal-area construction does not imply that appearance descriptors always fail or that topology alone recognizes rare damage.
\section{Experimental design and reproducibility}\label{sec:design}
\subsection{Original cohort and immutable grouping}
The corpus combines vehicle images belonging to Progi, a company active in automotive insurance, with images downloaded from the Internet. Image annotations were performed by Serge Jonathan Kogbeti-Wongo. The original image-selection criteria, collection period, and detailed annotation protocol remain to be documented.

The original five image shards contain 69,108 vehicle-part crops. Byte verification and the authenticated cohort parser recover 55,303 eligible development crops from 4,916 recorded source groups and 16 part categories. Eight duplicate-piece annotations are excluded, and 13,797 historically reserved images are excluded from fitting and evaluation. The accounting is exact: $55{,}303+8+13{,}797=69{,}108$. Eligible labels contain 11,185 damaged and 44,118 undamaged crops, a damage prevalence of 20.22\%.

The five saved outer folds are retained without resplitting. They contain 10,882--11,173 crops each, and a recorded source group belongs to exactly one fold. Eligible images have no byte-identical duplicates, and recorded validation groups do not cross folds. Source-group identifiers describe the supplied grouping; the records do not independently establish one group per unique vehicle or acquisition session. The median shorter and longer crop sides are 77 and 155 pixels, respectively; 7,922 crops have a shorter side below 32 pixels.

The feature extractor receives image arrays, not filenames, source identifiers, part labels, damage labels, or explicit dimension variables. Neutral row IDs bind feature order to the saved cohort. The parser verifies the original assignment, exclusions, row ordering, and fold artifacts by cryptographic hashes. Labels are joined only for eligible rows in the fitting/evaluation stage. This design limits leakage through recorded metadata; it cannot resolve unrecorded dependencies or annotation errors.

\subsection{Fitted models and checkpoint reproduction}
The tree candidates share a binary LightGBM objective with 600 trees, learning rate 0.03, 63 leaves, minimum 30 observations per leaf, 63 histogram bins, 75\% feature subsampling per tree, and $L_1/L_2$ penalties 0.1/4.0. Row subsampling is disabled. Deterministic column-wise construction is requested; the outer seed is $20260818+f-1$ for fold $f$. The configurations are fixed across folds.

The comparator uses official ImageNet-pretrained YOLO11n classification weights, authenticated by SHA-256. Uncleaned RGB images are aspect-preserving bilinear letterboxes on a $224\times224$ canvas with padding intensity 114 and division by 255. Its frozen 1,280-coordinate pooled embedding is followed by a training-fold-fitted standard scaler and class-balanced logistic regression ($C=1$, L-BFGS, tolerance $10^{-7}$, at most 2,000 iterations). The initial fits check convergence. Thus both pipelines learn their task readouts from the same rows, but only the comparator uses external image-encoder pretraining.

For this paper, the complete eligible image cohort is reprocessed using repository commit \texttt{53f97847}. The saved group-held-out model checkpoints are loaded and replayed on regenerated features. All 20 full-data outer readouts and all 30 reduced-data readouts are checked against their saved predictions; image-to-score timings are remeasured. The reference descriptor at commit \texttt{a3afa719} is also re-extracted to check the optimization. This is \emph{checkpoint reproduction}: fitted parameters and previously selected inner-fold thresholds are retained, not re-estimated. Fresh feature computation and repeated evaluation verify the deployed calculation; they do not constitute a new independent cohort or a fresh refitting experiment.

The complete model receives fold-specific operating thresholds selected from three stratified grouped inner folds inside each outer training set. The primary rule maximizes inner F1, taking the higher threshold on a tie. The secondary rule maximizes recall among inner thresholds with precision at least 0.95, breaking ties by precision and then threshold. An infeasible rule predicts no positives. These fixed thresholds transfer to the corresponding held-out groups; the inner precision target is not a held-out guarantee.

\subsection{Endpoints and paired uncertainty}
Average precision is the primary ranking endpoint; receiver operating characteristic area (ROC-AUC) is secondary \citep{davis2006}. AP is the non-interpolated sum of precision weighted by recall increments. Pooled predictions include exactly one group-held-out score for every eligible crop and candidate. Because pooling assumes cross-fold score comparability, foldwise AP is also reported. Operating-point precision, recall, specificity, F1, and confusion counts remain separate from AP.

Ten thousand common bootstrap draws resample recorded source groups within each outer fold, retaining all crops and multiplicities of each sampled group. Candidates share identical weights in every draw. Metric intervals use percentile 95\% bounds. Three fixed AP contrasts, complete \method\ minus appearance, appearance plus geometry, and frozen YOLO, use Bonferroni-adjusted 98.33\% intervals. These intervals describe group-sampling variation conditional on the fitted predictions; they exclude refitting and model-development uncertainty.

The label-budget study uses nested 25\%, 50\%, and 75\% subsets of outer-training source groups, with identical selected rows for both readouts and unchanged test folds. Each fraction involves five outer fits per method, hence 30 reduced-data checkpoints. All settings are fixed, and the 100\% point uses the full-data fits. The three comparisons against full-data YOLO receive the same simultaneous-interval adjustment. Because this is an already-used development cohort and one nested subset schedule, the learning curves are exploratory.

\subsection{Repeated CPU timing}
The same outcome-independent sample of 256 eligible crops is timed at batch size one on one pinned CPU core with one thread per method. All six method orders balance reference \method, optimized \method, and frozen YOLO. Each run prewarms the filesystem cache and excludes 16 warm-up predictions. Timing includes file reading, decoding, preprocessing, image representation, and classification; process/model initialization and training are excluded. Peak process memory includes imports and model loading. Crop-bootstrap speed-ratio intervals quantify this sampled workload, not variation over processors or GPU deployments.
\section{Results}\label{sec:results}
\subsection{Structural information and damage ranking}
\begin{table}[tb]\centering\small
\caption{Group-held-out ranking from newly replayed checkpoints.}\label{tab:ranking}
\begin{tabular}{lrrr}\toprule
Pipeline & AP & 95\% interval & ROC-AUC\\\midrule
Appearance & 0.744463 & [0.7354, 0.7535] & 0.896883\\
Appearance + geometry & 0.752745 & [0.7436, 0.7619] & 0.902127\\
Geo-vision & 0.757674 & [0.7487, 0.7665] & 0.902403\\
Frozen YOLO11n & 0.745805 & [0.7374, 0.7542] & 0.894295\\
\bottomrule\end{tabular}
\par\smallskip\small Intervals resample recorded source groups and condition on the fitted readouts.
\end{table}
\method\ achieves AP \GeoAP{} and ROC-AUC \GeoAUC{} on the complete group-held-out cohort. The paired AP gains are \AppearanceGain{} over appearance alone, \TopoGain{} over appearance plus geometry, and \YoloGain{} over frozen YOLO. The simultaneous interval for the added structural-block contrast is \TopoCI{}. Thus the additional structural block improves damage ranking beyond the geometric representation in this fitted development evaluation. The ablation concerns the whole block, which includes occupancy and component position as well as Euler/Betti values.

\begin{figure}[tb]\centering
\includegraphics[width=0.97\linewidth]{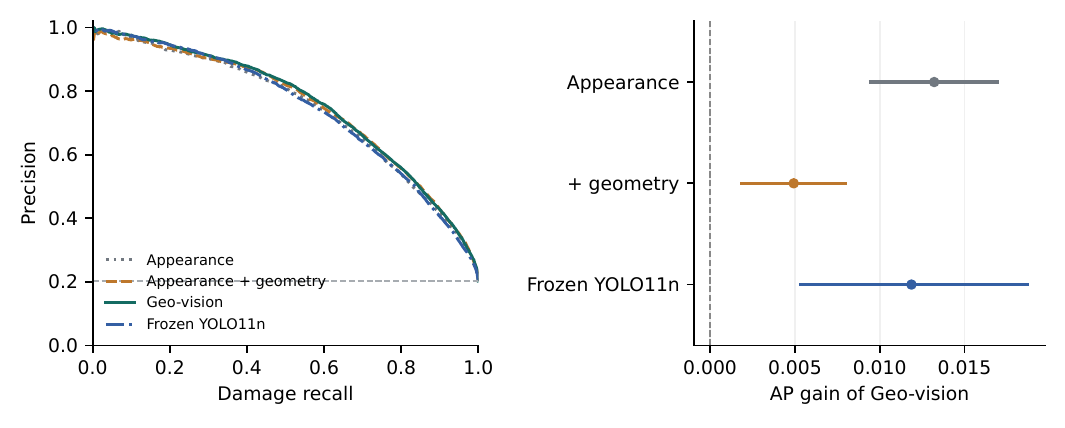}
\caption{Fresh checkpoint evaluation on the same 55,303 group-held-out crops. Left: precision--recall curves, with damage prevalence shown by the horizontal dashed line. Right: complete \method\ minus each comparator, with adjusted 98.33\% AP intervals. All candidates use the same group-bootstrap weights.}\label{fig:classification}
\end{figure}

\begin{table}[tb]\centering\small
\caption{AP within each immutable held-out fold.}\label{tab:folds}
\begin{tabular}{rrrrr}\toprule
Fold & Appearance & + geometry & Geo-vision & Frozen YOLO\\\midrule
1 & 0.7599 & 0.7695 & 0.7743 & 0.7653\\
2 & 0.7366 & 0.7443 & 0.7503 & 0.7389\\
3 & 0.7285 & 0.7376 & 0.7441 & 0.7382\\
4 & 0.7551 & 0.7617 & 0.7619 & 0.7463\\
5 & 0.7419 & 0.7513 & 0.7581 & 0.7400\\
\bottomrule\end{tabular}

\end{table}
The foldwise values expose variation that pooled ranking can conceal. The pooled ROC-AUC gain from the added structural block is \TopoAUCGain{}, smaller than the corresponding AP increment. This indicates that the extra representation particularly affects positive-class ranking; it does not establish dominance at every classification threshold. No damage localization metric is inferred from these crop-level scores.

\subsection{External pretraining and task-label budgets}
\method's image representation uses zero external pretraining images. The comparator's authenticated checkpoint identifies ImageNet as its training dataset and contains 1,000 original classes. ImageNet-1k has a documented training set of 1,281,167 images \citep{russakovsky2015,ultralytics2024}; this nominal set size is not an independently audited count of distinct checkpoint exposures. In contrast, full-data task readouts for both pipelines use the same 44,130--44,421 labeled crops per outer fold. A lack of encoder pretraining and a reduction in task-label requirements are distinct properties.

\begin{table}[tb]\centering\small
\caption{Matched task-training budgets for the two readouts.}\label{tab:learning}
\begin{tabular}{rrrr}\toprule
Training groups & Mean crops/fold & Geo-vision AP & Frozen YOLO AP\\\midrule
25\% & 11,101.8 & 0.714900 & 0.670082\\
50\% & 22,123.6 & 0.735524 & 0.720404\\
75\% & 33,179.0 & 0.749582 & 0.737861\\
100\% & 44,242.4 & 0.757674 & 0.745805\\
\bottomrule\end{tabular}
\par\smallskip\small The image operators and classifier settings are fixed. Reduced-data readouts are replayed, not refitted.
\end{table}
At matched label budgets, complete \method\ has higher AP point estimates than the frozen YOLO readout. At the 25\% source-group budget, its AP advantage is \LowBudgetGain{}. Using 75\% of training groups, with a mean of \TrainSeventyFive{} labeled crops per fold, \method\ reaches AP \APSeventyFive{}, compared with \YoloAP{} for full-data YOLO using \TrainFull{} crops per fold. This is a promising point-estimate result with approximately 25\% fewer task labels. Its adjusted difference interval, \LearningSeventyFiveCI{}, includes zero; superiority and equivalence at that smaller budget are not established. The 25\% and 50\% \method\ budgets remain below full-data YOLO in the planned comparisons.

\begin{figure}[tb]\centering
\includegraphics[width=0.88\linewidth]{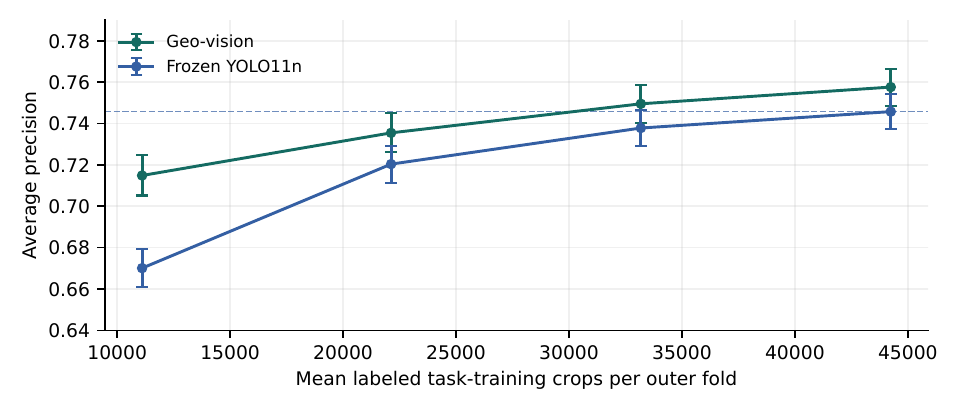}
\caption{Matched, nested training-group budgets with unchanged held-out folds. Both methods receive exactly the same task-training crops at each point. Error bars are conditional 95\% source-group intervals. The dashed line denotes the full-data frozen YOLO point estimate, not an equivalence threshold.}\label{fig:learning}
\end{figure}

\subsection{Inference speed and memory}
\begin{table}[tb]\centering\small
\caption{Complete image-to-score CPU performance on 256 common crops, six balanced orders per environment.}\label{tab:runtime}
\begin{tabular}{llrrr}\toprule
CPU & Pipeline & Mean ms & Crops/s & Peak MiB\\\midrule
EPYC 9V74 & Geo-vision & 22.59 & 44.3 & 227\\
EPYC 9V74 & Frozen YOLO + head & 11.83 & 84.5 & 511\\
Xeon 8573C & Geo-vision & 26.98 & 37.1 & 185\\
Xeon 8573C & Frozen YOLO + head & 11.20 & 89.3 & 437\\
\bottomrule\end{tabular}
\par\smallskip\small One pinned core, one thread, batch size one; initialization and warm-up excluded; reading, preprocessing, features and classification included.
\end{table}
On Intel Xeon Platinum 8573C, Geo-vision processes 37.1 crops per second with mean image-to-score latency \GeoLatency{} ms and peak process memory \GeoMemory{} MiB. Frozen YOLO11n processes 89.3 crops per second at \YoloLatency{} ms and \YoloMemory{} MiB. On AMD EPYC 9V74, the corresponding latencies are 22.59 and 11.83 ms, with 227 and 511 MiB peak memory. Geo-vision therefore uses approximately 56--58\% less peak process memory across these two measured environments, while frozen YOLO has lower serial inference latency. The AP and label-budget advantages reported above coexist with this computational trade-off. Comparisons concern complete CPU pipelines, including image loading and both readouts, rather than encoder-only or GPU throughput.

\subsection{Numerical reproducibility}
Feature reproducibility checks compare 271,758,942 feature coordinates; 2,014 differ, with maximum absolute difference 0.00086899102. Euler, component-location, and Betti coordinates agree exactly. All 165,909 held-out predictions from the 15 full-data tree readouts agree exactly between those two extractions. Including the reduced-data readouts, all 331,818 predictions from the 30 tree checkpoints match between the fresh reference and deployed extractions. Relative to the stored predictions, full Geo-vision and appearance-plus-geometry are exact, but 4 predictions across the appearance-only and reduced-data tree arms differ (maximum score change 0.00136407). The full-data YOLO replay differs at 38 scores by at most 1.44e-15; its reduced-data scores are exact. 
Full Geo-vision scores reproduce the stored full-data predictions exactly. Small differences in other readouts are recorded in the reproduction supplement; the tables and intervals use the freshly computed scores. Reference and deployed implementations produce identical tree scores within the same runtime. This distinguishes same-environment computational equivalence from cross-environment bitwise reproducibility.

\subsection{Operating points and heterogeneity}
\begin{table}[tb]\centering\small
\caption{Geo-vision decisions at fixed grouped-inner-selected thresholds.}\label{tab:operating}
\begin{tabular}{lrrrrrrr}\toprule
Rule & Precision & Recall & Specificity & TP & FP & FN & TN\\\midrule
Maximum inner F1 & 0.6581 & 0.7003 & 0.9078 & 7,833 & 4,069 & 3,352 & 40,049\\
Inner precision target & 0.9662 & 0.1356 & 0.9988 & 1,517 & 53 & 9,668 & 44,065\\
\bottomrule\end{tabular}

\end{table}
The grouped-inner maximum-F1 rule gives precision \FOnePrecision{}, sensitivity \FOneSensitivity{}, specificity \FOneSpecificity{}, and F1 \FOneScore{}. The inner precision-target rule achieves held-out precision \HighPrecision{} with sensitivity \HighPrecisionRecall{}. High precision therefore describes a particular decision rule and its recall, not 0.95 AP or near-complete detection.

Only \SmallTP{} of 216 damaged crops with a shorter side below 32 pixels are detected at the maximum-F1 rule. Their AP is \SmallAP{}, compared with \LargeAP{} among larger crops. Resolution, part category, and damage prevalence vary together, so this subgroup contrast is descriptive. It identifies a practical failure stratum that the pooled AP must not conceal. Across the 16 part categories, AP ranges from \PartLowAP{} for \PartLowName{} to \PartHighAP{} for \PartHighName{}; category-level estimates with few positive examples require cautious interpretation.
\section{Discussion}
\method\ preserves complementary aspects of visible structure without external image-encoder learning. Native derivatives observe strength before enlargement; mean/fourth-order pooling retains amplitude; axial agreement describes directional consistency; signed ridges retain curvature polarity; and sampled topology records connectivity and holes. The paired ablation supports the added structural block's usefulness in the studied task.

Sparse-response retention, axial sign invariance, and margin-based mask/leaf preservation are conditional operator properties. Higher AP and improved point estimates at matched label budgets are empirical properties of the fitted pipelines. Zero external pretraining follows from the source; CPU speed and memory depend on the environment. These different kinds of evidence support distinct claims.

Topology measures the filtered response image, rather than the damaged surface itself. Manufactured edges, reflections, and image artefacts may create components or holes; smooth dents and chromatic damage may not create them. The small-crop result shows that native observation cannot recover missing detail. A future independently acquired evaluation should stratify resolution, part type, and damage mechanism, with matched model-selection budgets and a separately evaluated fine-tuned vision comparator. An ablation separating occupancy, local Euler values, centroid counts, and global Betti counts would clarify which measurements drive the added block's gain. Such a design is a testable next step, not a result inferred from the present bundled ablation.

The dataset is retrospective and currently private. Its sources and annotator are identified above, but the original image-selection criteria, collection period, detailed annotation protocol, and inter-rater agreement remain undocumented in the materials available for this study. Recorded source groups reduce overlap across evaluation folds, but independence across unrecorded vehicles or acquisition sessions has not been independently verified. The representation was developed within this dataset family, and checkpoint replay does not provide an independent external evaluation. The bootstrap intervals condition on fitted models and exclude variability from model refitting and alternative training subsets. These limitations restrict conclusions about generalization and the full uncertainty of the reported results.

\section{Conclusion}
\method\ combines amplitude-sensitive geometry with sampled Betti and Euler measurements to characterize damage-related image structure. On 55,303 crops, it achieves AP \GeoAP{} versus \YoloAP{} for frozen YOLO11n; the complete added structural block, comprising topology, occupancy, and component location, contributes \TopoGain{} AP beyond appearance and geometry, with an adjusted interval entirely above zero. Geo-vision needs no external encoder pretraining and has higher AP at every tested label budget. With approximately 25\% fewer task labels, its AP point estimate exceeds full-data YOLO, although the adjusted interval does not establish superiority at that reduced budget. Serial CPU latency is \GeoLatency{} versus \YoloLatency{} ms per crop, while peak memory is \GeoMemory{} versus \YoloMemory{} MiB: Geo-vision uses less memory, and YOLO is faster in this setting. These results support Geo-vision's structural representation and favorable accuracy--label-budget profile; independent data must establish transfer.

\section*{Code availability}
The code is available at \url{https://github.com/joseffaghihi/Geo-vision}.

\section*{Data availability}
The dataset is currently private. The conditions governing confidentiality and redistribution remain to be confirmed with Progi and the relevant rights holders for the Internet-sourced images. No public release or access upon request is currently promised.

\section*{Author contributions}
Serge Jonathan Kogbeti-Wongo annotated the dataset and contributed to the initial software implementation. Usef Faghihi and Amir Saki developed and refined the current software implementation and contributed equally to this study.

\section*{Use of AI tools}
AI tools were used to assist with software development and language editing. The authors take responsibility for the final manuscript and implementation.

\end{document}